\documentclass[11pt]{article}

\usepackage[utf8]{inputenc}
\usepackage[T1]{fontenc}
\usepackage{microtype}
\usepackage{fullpage}
\usepackage[osf,sc]{mathpazo}
\RequirePackage[scaled=0.90]{helvet}
\RequirePackage[scaled=0.85]{beramono}
\RequirePackage{textcomp}
\usepackage{graphicx}
\usepackage{booktabs}
\usepackage{nicefrac}
\usepackage{url}
\usepackage{xcolor}
\definecolor{DarkGreen}{rgb}{0.075,0.375,0.075}
\definecolor{DarkRed}{rgb}{0.5,0.1,0.1}
\definecolor{DarkBlue}{rgb}{0.1,0.1,0.5}
\usepackage{natbib}
\usepackage[skip=2mm,indent=5mm]{parskip}
\usepackage[small]{caption}
\usepackage{amsthm,amsmath,amsfonts,amssymb}
\usepackage{mathtools}
\usepackage{enumitem}
\usepackage{algorithm}
\usepackage{algpseudocode}
\usepackage{thm-restate}
\usepackage[pdftex]{hyperref}
\hypersetup{colorlinks=true, linkcolor=DarkBlue, citecolor=DarkGreen, filecolor=DarkRed, urlcolor=DarkBlue,
            pdftitle={Circumventing the Computational Hardness of Computational Arbitrage},
            pdfauthor={Nastaran Okati, Ricardo Olmedo, Moritz Hardt}}
\usepackage{cleveref}

\graphicspath{{figures/}}

\usepackage{amsmath,amsfonts,bm}

\def\eqref#1{equation~\ref{#1}}

\def\1{\bm{1}}

\def\vc{{\bm{c}}}

\def\vy{{\bm{y}}}

\DeclareMathAlphabet{\mathsfit}{\encodingdefault}{\sfdefault}{m}{sl}
\SetMathAlphabet{\mathsfit}{bold}{\encodingdefault}{\sfdefault}{bx}{n}

\def\gD{{\mathcal{D}}}

\def\gX{{\mathcal{X}}}
\def\gY{{\mathcal{Y}}}

\newcommand{\E}{\mathbb{E}}

\newcommand{\R}{\mathbb{R}}

\DeclareMathOperator*{\argmax}{arg\,max}

\newcommand{\pass}[3]{\operatorname{pass}_{#1,#2}@#3}
\newcommand{\hatpass}[3]{\widehat{\operatorname{pass}_{#1,#2}@#3}}

\newtheorem{theorem}{Theorem}

\newtheorem{assumption}{Assumption}
\theoremstyle{definition}
\newtheorem{example}{Example}

\newcommand{\ie}{i.e.}

\title{Circumventing the Computational Hardness of Computational Arbitrage}
\author{%
  Nastaran Okati\footnote{Corresponding author: \href{mailto:nastaran.okati@tuebingen.mpg.de}{nastaran.okati@tuebingen.mpg.de}} , 
  Ricardo Olmedo, and Moritz Hardt}
\date{\textit{Max Planck Institute for Intelligent Systems, and T\"ubingen AI Center}}

\begin{document}
\maketitle

\begin{abstract}
AI model marketplaces host providers with heterogeneous
costs and capabilities. These differences create an opportunity for \emph{computational arbitrage}~\citep{olmedo2026computational}, in which an intermediary purchases inference from a portfolio of models and resells it to users at a profit. Beyond creating profit opportunities for the arbitrageur, computational arbitrage offers financial value to both consumers and providers and could therefore significantly influence AI model markets.
While prior work empirically demonstrates the feasibility of computational arbitrage for small model pools, scaling it to large marketplaces requires arbitrage strategies that are both computationally tractable and inexpensive to identify and deploy.
In this work, we formalize computational arbitrage as an allocation problem and prove that it is generally NP-hard. However, we show that the optimal allocation can be computed efficiently under a natural diminishing-returns condition on how model performance scales with inference spending. We further develop a sample-efficient method for estimating model performance from limited data, reducing the upfront cost of identifying arbitrage strategies.
Our analysis also yields several economic insights: at the optimum, every funded model delivers the same marginal utility per dollar; arbitrage directs revenue away from providers offering little utility relative to their prices; and it rewards complementary specialization.  
\end{abstract}

\section{Introduction}
\begin{figure}[t]
    \centering
    \includegraphics[width=0.9\textwidth]{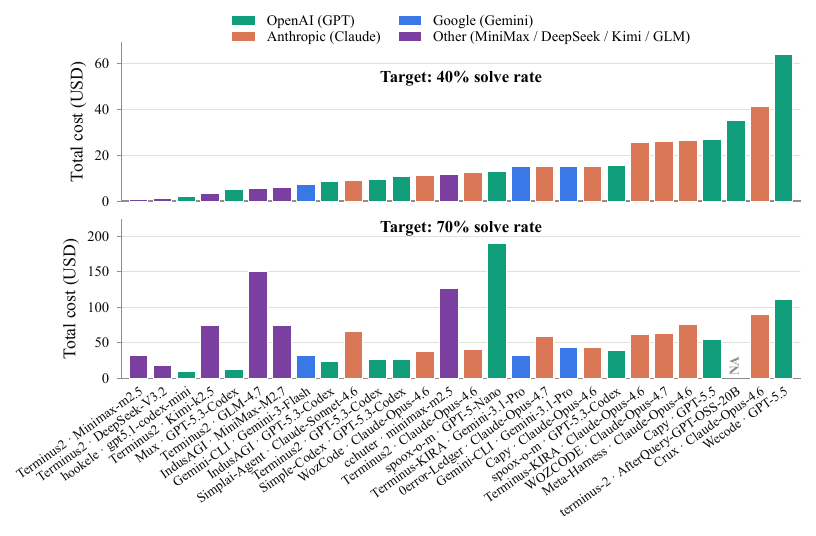}
    \caption{\textbf{The cost of a given performance target varies by more than an order of magnitude
across the market.} Expected total cost for each model to reach a $40\%$ (top) and a $70\%$ (bottom) solve rate on Terminal-Bench~2.0. The figure also shows that the cheapest models differ between the $40\%$ and $70\%$ targets, indicating that relative cost-effectiveness varies with the desired solve rate.
    For each model, we give every query the same budget, compute the probability that repeated sampling solves it before the budget is exhausted, and increase the budget until the average solve rate reaches the target. 
    ``NA'' marks a model that does not reach the target within the budget range considered. 
    Prices are based on the reported cost per trial when available, and otherwise token counts scaled by the LiteLLM price database as of September 2026. 
    Refer to \Cref{app:fig_1} for experimental details.
    } 
    \label{fig:tbench_cost} 
\end{figure}  

The proliferation of AI models with heterogeneous costs and capabilities makes it increasingly difficult for users to decide which service to choose. Even among models offering comparable performance, inference costs can differ substantially. For example, reaching a $40\%$ solve rate on Terminal-Bench can cost anywhere from less than a dollar to more than $\$60$; reaching $70\%$ can cost from roughly $\$10$ to over $\$180$ (see \Cref{fig:tbench_cost}). 
Such price dispersion creates the potential for \emph{computational arbitrage}, where an intermediary allocates inference budget across model providers, achieves a given level of performance at a lower cost, and resells it to users at a profit.
%
Computational arbitrage brings broad economic benefits, including lower consumer prices, reduced market segmentation, and easier market entry for smaller providers~\citep{olmedo2026computational}.

Despite these benefits, computational arbitrage is not yet deployed at scale in AI marketplaces, due to several challenges.
First, given model prices and capabilities, an arbitrageur must determine how to allocate an inference budget across models to maximize utility. As the number of models grows, exhaustively searching over possible allocations becomes computationally infeasible. Second, even after the budget is allocated, the arbitrageur must determine the order in which to query models to avoid unnecessary spending. Third, computing an effective allocation requires knowing how each model's success probability varies across queries and inference budgets, quantities that are generally unavailable and prohibitively expensive to estimate, as doing so requires many paid model calls. 

In this work, we address these challenges and show that the optimal arbitrage strategy is both tractable to compute and inexpensive to identify and deploy in large model marketplaces.
%
Our results yield several interesting economic implications. At the optimum, every funded model delivers the same marginal utility per dollar. Revenue therefore flows toward providers whose performance justifies their prices, while those offering little utility relative to their prices receive little or no revenue. This result contributes to the growing literature on LLM service pricing: regardless of the billing scheme used, a provider’s ability to attract revenue ultimately depends on the utility it delivers per dollar. 
Providers can therefore compete in two ways: by offering sufficiently low prices or by specializing in queries that other models are likely to fail; computational arbitrage hence creates an incentive for complementary specialization.

In more detail, we make the following contributions:
\begin{itemize}[itemsep=4pt, topsep=0pt, parsep=0pt, partopsep=0pt]
    \item \textbf{Optimal arbitrage strategy.} We formulate arbitrage as a budget-allocation problem and show that it is generally NP-hard. We then show that whenever inference spending has diminishing returns in log-failure reduction, the objective is jointly concave and its global optimum is computable efficiently even in large model markets. Across four agentic and coding benchmarks, our arbitrage strategy achieves a better cost–performance trade-off than any individual model
    (see \Cref{fig:frontier}).
    
    \item \textbf{Deployment order.} An arbitrageur can reduce their realized costs by querying funded models sequentially and stopping once an acceptable solution is found. Although we show that finding the optimal fixed order is NP-hard, we develop a greedy rule that prioritizes models with the highest expected utility per cost on queries missed by earlier models. 
    Across benchmarks, our greedy ordering closely tracks the best order found by exhaustive search
    (see \Cref{fig:order}).
    
    \item \textbf{Sample-efficient utility estimation.} 
    Building on distributional strategies in predicting inference-time scaling,
    we adapt empirical Bayes estimators that recover query-level utility curves from limited data. Across six benchmarks, an arbitrageur using our estimation method reaches nearly the same solve rate as an arbitrageur estimating the utility curves from orders of magnitude more data
    (see \Cref{fig:calibration,fig:calibration_app}).\footnote{Code and data are available at \url{https://github.com/nastaran75/arbitrage-markets}.}
    

\end{itemize}

\section{The optimal arbitrage strategy}\label{sec:optimal}
%
%
We consider a market of $N$ model providers where each model can map user queries, $x\in \gX$, to proposed solutions $y\in\gY$. The quality of the solution depends on the model, the difficulty of the query, as well as the amount of inference budget, $c$, devoted to the query. For example, the budget could represent the number of sampling attempts~\citep{chow} or reasoning effort~\citep{wei2022chain}. Each model $i$ hence induces a conditional distribution $p_i(y\mid x,c)$. 
Let $A:\gX\times\gY\to\{0,1\}$ indicate whether $y$ is an acceptable solution to query $x$.   
%
For a given query distribution $x\sim D$ and a per-query budget $C$, a rational user would spend the budget on the model that achieves the highest acceptance rate, \ie,    
$\argmax_{i\in[N]}
\E_{x\sim D, y\sim p_i(\cdot\mid x,C)}
\left[A(x,y)\right].
$
%
Surprisingly, this strategy can be suboptimal
even when that model outperforms every other model on every query. Specifically, diminishing returns to inference spending
can make allocating additional budget to a weaker model more
valuable than spending it on the dominant model.
Furthermore, allocating the budget across multiple models can exploit \emph{complementary} strengths and provide a further reason
to diversify.\footnote{\Cref{app:example_1,app:example_2} in \Cref{app:examples} provide simple two-model examples in which distributing the budget
among the two models strictly beats every single-model allocation of the same total budget under both scenarios.}

An arbitrageur is an intermediary who benefits from this gap through diversification.
%
%
%
Rather than allocating the entire budget to the most cost-effective model, the arbitrageur chooses a query-independent allocation 
$\vc=(c_1,\dots,c_N)\in\R_{\geq 0}^N$, such that $\sum_{i\in[N]} c_i \leq C$,
\ie, for every query $x$, an inference budget of $c_i$ is allocated to model $i$. 
%
Let $\vy=(y_1,\ldots,y_N)$ denote the outputs produced by the $N$ models under allocation $\vc$. We denote their joint distribution as
$p(\vy\mid x,\vc)$, with marginals
$p_i(y_i\mid x,c_i)$. The arbitrageur wants to maximize the acceptance probability, given the budget, \ie, 
\begin{align}\label{eq:cost_objective}
\underset{\vc\in\R_{\geq 0}^N}{\mathrm{maximize}} 
\quad
U(\vc)
:=
\E_{x\sim D,\vy\sim p(\cdot\mid x,\vc)}
\left[
1-\prod_{i=1}^N\bigl(1-A(x,y_i)\bigr)
\right]
~~\text{s.t.}
~
&\sum_{i\in[N]}c_i\leq C.
\end{align}

%

The combinatorial nature of the problem makes it computationally intractable:\footnote{All proofs can be found in \Cref{app:proofs}.} 

\begin{restatable}{theorem}{thmNPhard}
    \label{thm:arbitrage_nphard}
    Finding the optimal arbitrage policy as in \Cref{eq:cost_objective} is NP-hard.
\end{restatable}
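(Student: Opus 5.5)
We reduce from \textsc{Maximum Coverage}, which is NP-hard: given a ground set $\{e_1,\dots,e_m\}$, subsets $S_1,\dots,S_N$, and an integer $k$, decide whether some $k$ subsets cover at least $t$ elements. The construction makes budget units behave like subset choices. The query distribution $D$ is uniform over $m$ queries $x_1,\dots,x_m$, one per element, and the total budget is $C=k$. Each model $i$ has a step-shaped performance curve. If $c_i\ge 1$, model $i$ deterministically outputs an acceptable solution on every $x_j$ with $e_j\in S_i$ and an unacceptable one otherwise. If $c_i<1$, it always outputs an unacceptable solution. Outputs are deterministic given $(x,\vc)$, so the joint distribution $p(\vy\mid x,\vc)$ is the product of point masses and no correlation modelling is needed. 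Then
\[
U(\vc)=\frac{1}{m}\Bigl|\bigcup_{i:\,c_i\ge 1} S_i\Bigr|,
\]
which is exactly the coverage fraction of the set of fully funded models.

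Next we show the continuous optimization collapses to the discrete problem. Any feasible $\vc$ has $\sum_i c_i\le k$, so at most $k$ indices satisfy $c_i\ge1$. Conversely, every family of at most $k$ subsets is realized by setting $c_i=1$ on those indices and $0$ elsewhere. Hence $\max_{\vc} U(\vc)=\frac1m\max_{|T|\le k}|\bigcup_{i\in T}S_i|$. An optimal allocation immediately yields an optimal cover by taking $\{i:c_i\ge1\}$, and comparing the optimal value with $t/m$ decides the instance. The reduction is polynomial, since each model's behaviour is specified by the subset $S_i$ and a threshold.

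The main obstacle is the input model: we must say how $p_i(\cdot\mid x,c)$ is given to an algorithm so the reduction is polynomial and legitimate. We plan to let each model be described by its per-query success curves $c\mapsto \Pr[A(x_j,y_i)=1]$, given explicitly as step functions with one breakpoint. Outputs are taken as conditionally independent across models, which matches the product form in \Cref{eq:cost_objective}.

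A reader may object to discontinuous curves. If so, we would replace the step by a steep piecewise-linear ramp on $[1-\epsilon,1]$ and argue that partial funding is never helpful. A cleaner alternative under the "repeated sampling" interpretation is to give model $i$ unit price and success probability $1$ on $S_i$ after a single attempt, so budget is integral in effect. Either way, the integrality argument above carries over with minor bookkeeping.
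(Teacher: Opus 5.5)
Your proposal is correct and follows the paper's proof essentially verbatim. It uses the same reduction from Maximum Coverage (uniform $D$ over the ground set, $C=k$, models that succeed deterministically on $S_i$ exactly when $c_i\ge 1$), the same two-direction argument that at most $k$ models can be funded to level $1$, and your repeated-sampling variant with unit price per attempt is the paper's separate appendix result, \Cref{thm:repeated_sampling_hardness}.
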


%
%
We therefore identify conditions under which the problem becomes tractable. For each model, we first define the utility function $u_i: \gX \times \R_{\geq0} \to [0,1]$ as
   $ u_i(x;c_i):=\E_{y\sim p_i(\cdot\mid x,c_i)}[A(x,y)]$,
and assume that model outputs are conditionally independent given the query.\footnote{This assumption says that conditional on a query, the sampling processes of different LLMs are mutually independent, a natural assumption made also in prior work~(\citealp[Assumption~C.1]{ai2025beyond}; \citealp{farinhas2024reranking}). This assumption does not restrict arbitrary overlap in models' strengths and weaknesses \emph{across} queries.} We can hence rewrite the objective in \Cref{eq:cost_objective} as follows:
\begin{align}\label{eq:indep_cost_objective}
    U(\vc)
    &= \E_{x\sim D}\left[1-\prod_{i=1}^N (1-u_i(x;c_i))\right] = \E_{x\sim D} \left[1-e^{-\sum_{i=1}^N\alpha_i(x,c_i)}\right]
    ~~\text{s.t.}\sum_{i\in[N]}c_i\leq C,
\end{align}
where $\alpha_i(x;c):=-\log\bigl(1-u_i(x;c)\bigr)$, \ie, the log-failure reduction on query $x$ purchased with $c$ dollars from model $i$. Conditional independence of model outcomes given a query does not restore tractability;\footnote{In the reduction proving
\Cref{thm:arbitrage_nphard}, acceptance outcomes are
conditionally independent given the query.} we therefore impose a natural diminishing property on how model performance scales with inference spending; more formally we have the following assumption:
\begin{assumption}\label{ass:logconvex}
For every model $i\in[N]$ and query $x\in\gX$, the log-failure reduction $c \mapsto \alpha_i(x,c)$ is non-decreasing and concave on $\R_{\geq 0}$.
\end{assumption}
\Cref{ass:logconvex} formalizes diminishing returns to inference compute: it requires the log-failure reduction purchased from each additional dollar to be non-increasing. This property has been empirically observed for common inference-scaling strategies, including repeated sampling~\citep{schaeffer2025powerlaws}, chain-of-thought reasoning~\citep{ghosal2025thinking,snell2024scalingtesttime}, and post-training~\citep{chen2026leaderboard}.

This assumption makes the allocation problem tractable:
\begin{restatable}{theorem}{thmKKT}\label{thm:KKT}
Under \Cref{ass:logconvex}, $U$ is non-decreasing and jointly concave. If $D$ has finite support and each
$\alpha_i(x;\cdot)$ is differentiable, a global optimum exists and is characterized by the KKT conditions. This optimum can be found efficiently using cyclic coordinate-ascent on the Lagrangian.
\end{restatable}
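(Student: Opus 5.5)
The plan is to prove the structural properties pointwise in $x$ and then integrate. Fix a query $x$ and set $S_x(\vc):=\sum_{i=1}^N\alpha_i(x;c_i)$, so the integrand in \Cref{eq:indep_cost_objective} is $g(S_x(\vc))$ with $g(s)=1-e^{-s}$.

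\textbf{Step 1 (monotonicity and concavity).} Under \Cref{ass:logconvex}, $S_x$ is a separable sum of non-decreasing concave functions of the individual coordinates, so it is non-decreasing and jointly concave on $\R_{\geq 0}^N$. The outer function $g$ is non-decreasing and concave on $\R_{\geq0}$. The standard composition rule (a concave non-decreasing function of a concave function is concave) then gives that $\vc\mapsto g(S_x(\vc))$ is concave and non-decreasing. To verify this directly, take $\lambda\in[0,1]$. Then $g(S_x(\lambda\vc+(1-\lambda)\vc'))\ge g(\lambda S_x(\vc)+(1-\lambda)S_x(\vc'))$ by monotonicity of $g$, and this is $\ge \lambda g(S_x(\vc))+(1-\lambda)g(S_x(\vc'))$ by concavity of $g$. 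Taking the expectation over $x\sim D$ preserves both monotonicity and concavity, since these properties are closed under nonnegative mixtures. One must also allow $\alpha_i=+\infty$ when $u_i=1$. I would handle this by working with $1-\prod_i(1-u_i)$ and the extended reals, or by noting that $g(+\infty)=1$ is consistent with the argument.

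\textbf{Step 2 (existence and KKT).} The feasible set $\{\vc\ge0,\sum_i c_i\le C\}$ is a compact simplex. A concave function that is finite on $\R_{\geq0}^N$ is continuous on the relative interior. Continuity up to the boundary follows from monotonicity together with the fact that a one-dimensional concave function on $[0,\infty)$ is upper semicontinuous at $0$; lower semicontinuity at $0$ needs care. The simplest route is to use the differentiability hypothesis, interpreted as differentiability on $[0,\infty)$ with one-sided derivatives at $0$, which gives continuity of each $\alpha_i$. Since $D$ has finite support, $U$ is a finite sum of continuous functions, so Weierstrass yields a maximizer. The constraints are linear, so Slater's condition holds trivially (take $\vc=\frac{C}{2N}\1$ when $C>0$). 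For a concave objective with linear constraints, KKT is necessary and sufficient for global optimality. The conditions are a multiplier $\mu\ge0$ with
\[
\frac{\partial U}{\partial c_i}(\vc)=\E_{x\sim D}\!\left[e^{-S_x(\vc)}\,\alpha_i'(x;c_i)\right]\le\mu,\quad\text{with equality whenever } c_i>0,
\]
together with $\mu(C-\sum_i c_i)=0$. This is exactly the equal marginal utility per dollar interpretation.

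\textbf{Step 3 (algorithm).} Fix $\mu$ and maximize the Lagrangian $L(\vc,\mu)=U(\vc)-\mu\sum_i c_i$ over $\vc\ge0$ by cyclic coordinate ascent. Each coordinate subproblem is a one-dimensional concave maximization, solvable by bisection on the monotone derivative. Because $L$ is concave and each $\alpha_i$ couples only through its own coordinate inside a smooth outer function, the standard convergence theorem for block coordinate descent applies (e.g.\ Bertsekas/Tseng, which requires a unique coordinate-wise minimizer or a separable nonsmooth part). Every limit point is then stationary, hence globally optimal by concavity. An outer bisection on $\mu$ then matches the budget, since the optimal total spend is non-increasing in $\mu$.

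\textbf{Main obstacle.} I expect the main obstacle to be justifying the convergence of coordinate ascent rigorously, because the coordinate maximizers may be non-unique when some $\alpha_i$ is flat. I would first try to invoke Tseng's (2001) result for concave objectives whose nondifferentiable part is separable; here the constraint $\vc\ge0$ is separable and the smooth part is differentiable. This gives stationarity of limit points without a uniqueness requirement, provided the level sets are bounded. Boundedness holds because for $\mu>0$ the Lagrangian tends to $-\infty$ as $\|\vc\|\to\infty$, since $U\le1$.
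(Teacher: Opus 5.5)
Your proposal is correct and follows the paper's proof essentially step for step: composing the concave non-decreasing map $s\mapsto 1-e^{-s}$ with the separable concave sum, getting continuity from differentiability plus finite support, applying Weierstrass on the compact simplex, and then Slater plus KKT. Your Step~3 goes beyond the paper, which only says that coordinate ascent on $U(\vc)-\lambda\sum_i c_i$ is used, with no convergence argument. Your worry about non-unique coordinate maximizers is unfounded for $\mu>0$: a two-point maximizer set would force $U$ to be affine with slope $\mu$ along that coordinate segment, but $U$ can be affine there only if it is constant.

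One small slip: a non-decreasing concave function on $[0,\infty)$ is automatically lower semicontinuous at $0$ but need not be upper semicontinuous there (e.g.\ $f(0)=0$, $f\equiv 1$ on $(0,\infty)$). You have the two directions reversed, which is harmless only because you then invoke differentiability, exactly as the paper does.
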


We call the map \(C\mapsto U(\vc^\star(C))\), giving the best solve rate attainable at each budget, the \emph{arbitrage frontier}. We trace this frontier by sweeping the shadow price \(\lambda\) over a grid and maximizing the Lagrangian $U(\vc)-\lambda\sum_i c_i$ at each price by cyclic coordinate ascent; \Cref{alg:frontier} provides the pseudocode. Each coordinate-ascent sweep requires \(N\) coordinate updates, whereas exhaustive grid search, as in~\citet{olmedo2026computational}, scales exponentially with the number of models. On the markets of \Cref{sec:experiments}, with up to \(70\) models, tracing the full frontier using our method takes under half a second, whereas exhaustive search would be computationally infeasible at this scale.
\begin{algorithm}[H]
\caption{Tracing the arbitrage frontier}
\label{alg:frontier}
\small
\begin{algorithmic}[1]
\Require utility curves $\{u_i(x;\cdot)\}_{x\in\gD, i\in[N]}$, shadow prices
  $\lambda_1>\lambda_2>\dots>\lambda_L>0$, maximum number of sweeps $T$
\State $\vc\gets\mathbf 0$
\For{$k=1,\ldots,L$}\Comment{shadow prices decrease, so allocated budgets grow}
    \For{$t=1,\ldots,T$}
        \State $\vc^{\mathrm{old}}\gets\vc$
        \For{$i=1,\ldots,N$}
            \State $c_i\gets\argmax_{z\ge 0}\bigl\{U(c_1,\ldots,c_{i-1},z,c_{i+1},\ldots,c_N)-\lambda_k z\bigr\}$ \Comment{$U(\vc)$ as in \Cref{eq:indep_cost_objective}}
        \EndFor
        \If{$\vc=\vc^{\mathrm{old}}$} \textbf{break} \EndIf
    \EndFor
    \State $\vc^{(k)}\gets\vc$
\EndFor
\State \Return $\{\vc^{(k)}\}_{k=1}^L$ \Comment{the frontier: solve rate $U(\vc^{(k)})$ against total allocated budget $\sum_i c^{(k)}_i$}
\end{algorithmic}
\end{algorithm}
The KKT conditions characterizing the optimal allocation in \Cref{thm:KKT} yield several interesting implications discussed below. 
\paragraph{No further arbitrage opportunities.}
At the optimum, every funded model delivers the same marginal utility per dollar, \ie, 
\begin{align}\label{eq:KKT_interpretation}
        &\frac{\partial U(\vc^*)}{\partial c_i}
        =\E_{x\sim D}\!\left[
\underbrace{\frac{\partial \alpha_i(x,c_i^*)}{\partial c_i}}_{\substack{\text{Marginal log-failure reduction}\\\text{purchased per dollar for $x$}}}
\underbrace{e^{-\sum_{\ell=1}^N\alpha_\ell(x,c_\ell^*)}}_{\substack{\text{Probability of failure of}\\\text{the model pool on $x$}}}
\right]
        \;\begin{cases} =\lambda, & c_i^*>0,\\ \leq\lambda, & c_i^*=0. \end{cases}
\end{align} 
As a result, reallocation of the budget between funded models cannot raise utility,  \ie, no more arbitrage possibilities are available, which suggests the existence of a single consistent price system~\citep{ross1978simple,harrison1979martingales,dybvig1987arbitrage,varian1987arbitrage}.

\paragraph{Arbitrage against arbitrary pricing.} A growing line of recent work identifies shortcomings in current LLM pricing practices~\citep{artola2026overcharging,cai2025getting,saig2024incentivizing}, and proposes alternatives such as pay-per-character or pay-for-performance. Computational arbitrage offers a complementary perspective: the arbitrageur pays $1/\lambda$ dollars per unit of marginal utility from every funded provider, \ie, the effective price of marginal utility is equalized across providers, and providers whose marginal utility per dollar falls below $\lambda$ receive no revenue at all. This disciplines pricing regardless of the particular billing scheme each provider adopts. This mechanism parallels the role of arbitrage in goods markets: arbitrage pushes prices to their fundamental values~\citep{ross1976arbitrage,dybvig1987arbitrage,shleifer2000inefficient}, and constrains firms' ability to sustain discriminatory prices~\citep{malueg1994parallel,anderson1999international}.

\paragraph{Arbitrage values complementary specialization.}
Fix an allocation $\vc$ and the other models' performance curves.
Suppose provider $i$ makes a feasible local improvement
in its marginal utility per dollar:
\[
    \frac{\partial\tilde u_i(x;c_i)}{\partial c_i}
    =
    \frac{\partial\tilde\alpha_i(x,c_i)}{\partial c_i}
    e^{-\tilde\alpha_i(x,c_i)}
    =
    \frac{\partial\alpha_i(x,c_i)}{\partial c_i}
    e^{-\alpha_i(x,c_i)}
    +\varepsilon h_i(x),
    \qquad h_i(x)\geq0.
\]
The resulting change in its marginal contribution is
\[
    \frac{\partial\tilde U(\vc)}{\partial c_i}
    -
    \frac{\partial U(\vc)}{\partial c_i}
    =
    \varepsilon\,\E_{x\sim D}\left[
        h_i(x)\underbrace{e^{-\sum_{\ell\ne i}\alpha_\ell(x,c_\ell)}}_{\substack{\text{Probability of failure of}\\ \text{other models on $x$}}}
    \right].
\]
For a fixed average improvement $\E_{x\sim D}[h_i(x)]$,
targeting queries on which the other models are more likely
to fail yields a larger increase in the provider's marginal contribution. Furthermore, as competing models improve based on
the current allocation, this weight decreases on the queries
they address. Thus, the arbitrage allocation rule values \emph{complementary specialization}.

\section{Optimal deployment order}\label{sec:order}
The optimal allocation identifies how to distribute inference spending across providers to maximize utility under a given budget. Deploying this allocation raises a complementary question: in what order should the models be queried? In verifiable domains, the arbitrageur can stop as soon as an acceptable solution is found, so deployment order affects realized spending without changing the allocation’s success probability. Taking the allocation as fixed, we characterize cost-minimizing deployment orders and examine their implications for provider revenues and incentives.

Let $\pi$ be the permutation giving the deployment order.
%
We use the survival function identity to model the spend on each model; after spending $t$ dollars on model $i$ for solving sample $x$, the probability that it has not yet succeeded is $1-u_i(x;t)$. The expected spend on model $i$ for query $x$, conditional on reaching it, is therefore
\begin{align}\label{eq:ind_cost}
R_{i}(x;c_{i}):= \int_0^{c_{i}} \bigl(1-u_i(x;t)\bigr)\,\mathrm{d}t.
\end{align}
%
The expected spend for solving query $x$ is hence:
\begin{align}\label{eq:total_cost}
    R^{\pi}(x;\vc) = \sum_{i=1}^N
    R_{\pi(i)}(x;c_{\pi(i)})
    \underbrace{\prod_{j<i}\bigl(1-u_{\pi(j)}(x;c_{\pi(j)})\bigr)}_{\substack{\text{Probability that all}\\\text{ earlier models fail on $x$}}}.
\end{align}
The query-dependent cost-minimizing order then has a simple characterization:
\begin{restatable}{proposition}{propOrder}
    For a fixed allocation \(\mathbf{c}\) and instance \(x\), the order $\pi$ that minimizes
\(R^\pi(x;\mathbf{c})\) queries the models in decreasing order of acceptance probability
per expected dollar spent, \(\frac{u_i(x;c_i)}{R_i(x;c_i)}\). 
\end{restatable}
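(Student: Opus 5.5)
This is the classical exchange (adjacent-swap) argument for sequencing with stopping, in the style of the Weitzman or Smith-rule argument. Fix $x$ and $\vc$ and abbreviate $u_i:=u_i(x;c_i)$, $q_i:=1-u_i$ and $R_i:=R_i(x;c_i)$. Then \Cref{eq:total_cost} reads
\[
R^\pi=\sum_{k=1}^N R_{\pi(k)}\prod_{j<k}q_{\pi(j)}.
\]

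The first step is to compare $\pi$ with the order $\pi'$ obtained by swapping the adjacent positions $k$ and $k+1$, say models $a=\pi(k)$ and $b=\pi(k+1)$. Let $P:=\prod_{j<k}q_{\pi(j)}$. Terms before position $k$ are unchanged. Terms after position $k+1$ are also unchanged, because they carry the factor $P\,q_aq_b$, which is symmetric in $a$ and $b$. Only the two middle terms differ, and
\[
R^\pi-R^{\pi'}=P\bigl(R_a+q_aR_b\bigr)-P\bigl(R_b+q_bR_a\bigr)=P\bigl(u_bR_a-u_aR_b\bigr).
\]
Hence, whenever $P>0$, putting $a$ first is weakly better if and only if $u_a/R_a\ge u_b/R_b$, i.e.\ $u_aR_b\ge u_bR_a$. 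I will use the cross-multiplied form so that degenerate cases cause no division problems.

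The second step is the global argument. Take any order that is not sorted in decreasing $u_i/R_i$. It contains an adjacent inversion, and swapping that pair does not increase $R^\pi$; this holds even if $P=0$, since then the difference is zero. Each swap removes exactly one inversion, so finitely many swaps (bubble sort) turn any order into the sorted order without ever increasing the cost. The sorted order therefore attains the minimum over all $N!$ permutations. Ties in the ratio give equal cost, so any tie-breaking rule works.

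The main obstacle is bookkeeping for the degenerate cases rather than the algebra. If $c_i=0$, then $R_i=0$ and $u_i(x;0)$ may be positive, so I will read the ratio as $+\infty$, or simply state the result in the comparison form $u_aR_b\ge u_bR_a$. I also need $R_i>0$ whenever $c_i>0$, which holds because $1-u_i(x;t)>0$ near $t=0$ unless the model succeeds for free; if it does, the ratio is again read as $+\infty$. Finally, I must check that the factor $q_aq_b$ shared by the later terms is genuinely order-invariant. It is, because the later terms depend only on the set of models preceding them, not on their order.
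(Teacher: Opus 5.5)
Your proof is correct and follows the paper's argument. The paper also compares the adjacent pair's contributions $\rho\bigl[R_a+(1-u_a)R_b\bigr]$ versus $\rho\bigl[R_b+(1-u_b)R_a\bigr]$, notes that later positions see only the symmetric factor $(1-u_a)(1-u_b)$, and reduces to $R_a u_b \le R_b u_a$ before invoking an exchange argument; your explicit bubble-sort completion and cross-multiplied handling of the degenerate cases ($P=0$, $R_i=0$) make rigorous what the paper leaves implicit.
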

%
The per-query optimum provides a benchmark for deployment and motivates query-adaptive ordering when reliable performance predictions are available, paralleling results in LLM routing literature. Without such predictions, we seek a fixed order that minimizes expected cost across the query population. In the following, we show that finding the optimal deployment order in this case is NP-hard, using a reduction from the min-sum set cover problem.
\begin{restatable}{theorem}{propNPHard}
    Finding a fixed deployment order that minimizes $\E_{x\sim D}[R^\pi(x;\vc)]$ is NP-hard.    
\end{restatable}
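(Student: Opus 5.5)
We reduce from min-sum set cover (MSSC), which the paper itself names as the source problem and which is NP-hard. An instance of MSSC consists of a ground set of elements $e_1,\dots,e_m$ and subsets $S_1,\dots,S_N$ covering it. The goal is an ordering $\pi$ of the sets minimizing $\sum_{j} f_\pi(e_j)$, where $f_\pi(e_j)$ is the position of the first set in $\pi$ containing $e_j$. We will build, in polynomial time, an allocation $\vc$, utility curves, and a query distribution $D$ such that for every $\pi$ the quantity $\E_{x\sim D}[R^\pi(x;\vc)]$ is an affine, strictly increasing function of the MSSC cost of $\pi$. A minimizing deployment order is then exactly an optimal MSSC ordering.

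\textbf{Construction.} Create one model per set $S_i$ and one query $x_j$ per element $e_j$, and let $D$ be uniform over $\{x_1,\dots,x_m\}$. Give every model budget $c_i=1$. Make success deterministic: set $u_i(x_j;c_i)=1$ if $e_j\in S_i$ and $0$ otherwise. This is not yet fully specified, because \Cref{eq:ind_cost} charges the whole integral $\int_0^{c_i}(1-u_i(x;t))\,\mathrm{d}t$. We therefore choose the curves so that every model's expected spend on every query is the same constant $r$:
\begin{itemize}
\item \emph{Failing model} ($e_j\notin S_i$): take $u_i(x_j;t)=0$ for all $t$, so $R_i(x_j;1)=1$.
\item \emph{Succeeding model} ($e_j\in S_i$): take $u_i(x_j;t)=\mathbf{1}[t\ge 1]$, i.e.\ success exactly when the full budget is spent. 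Then $R_i(x_j;1)=1$ as well.
\end{itemize}
With this step-function choice the spend is $1$ in both cases, so $r=1$. If the setting requires continuous curves, we instead use a steep ramp on $[1-\delta,1]$. This gives $R\in[1-\delta,1]$ for succeeding models, with $\delta$ chosen polynomially small so the resulting perturbation cannot change which order is optimal. \Cref{ass:logconvex} is only needed for \Cref{thm:KKT}, not for this ordering problem, so we are free to use these curves. We will remark on this explicitly when writing the proof.

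\textbf{Evaluating the objective.} Under this construction, the survival product $\prod_{j'<k}(1-u_{\pi(j')}(x_j;1))$ in \Cref{eq:total_cost} equals $1$ exactly when none of the first $k-1$ sets in $\pi$ contains $e_j$, and $0$ otherwise. Hence
\[
R^\pi(x_j;\vc)=\sum_{k=1}^{N}\mathbf{1}[k\le f_\pi(e_j)]=f_\pi(e_j).
\]
Averaging over the uniform distribution $D$ gives
\[
\E_{x\sim D}[R^\pi(x;\vc)]=\frac1m\sum_j f_\pi(e_j),
\]
which is the MSSC objective scaled by $1/m$. So an optimal deployment order yields an optimal MSSC ordering, and the reduction is clearly polynomial in the instance size.

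\textbf{Main obstacle.} The delicate step is the modeling of the curves. The survival-integral cost model, the requirement that the $u_i$ be legitimate success probabilities, and any implicit regularity such as continuity must all be satisfied while still making the per-model spend query-independent. The plan is to use the step function first. If continuity is demanded, we fall back to the ramp and a rounding argument: the objective is perturbed by at most $N\delta$, while distinct MSSC costs differ by at least $1/m$ after scaling, so any $\delta<1/(2mN)$ leaves the optimal order unchanged.
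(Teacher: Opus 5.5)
Your reduction is correct and is essentially the paper's own. Both use min-sum set cover with one uniformly weighted query per element, one unit-budget model per set, and deterministic success exactly on that set's elements. The only difference is cosmetic: the paper uses the repeated-sampling curve $1-(1-p_i(x))^t$ with $p_i(x)\in\{0,1\}$, so a succeeding model costs $0$ and the objective becomes $\frac{1}{m}\sum_j f_\pi(e_j)-1$, whereas your success-at-full-budget step charges every queried model exactly $1$ and gives the MSSC objective itself; and since the paper's succeeding-model curve already jumps to $1$ immediately, your continuity fallback is not needed.
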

Given this hardness result, we adopt a greedy ordering rule inspired by the 4-approximation algorithm for min-sum-set-cover~\citep{feige2004approximating}. Starting from an empty set, each time we append the model with the
largest reach-weighted coverage per expected dollar. Let $S_t$ be the set of models already queried up to step $t$ and let $\rho_{S_t}(x) = \prod_{i\in S_t} (1-u_i(x;c_i))$; then our greedy algorithm picks a model at step $t+1$ such that:
\begin{align}
    \pi(t+1)
    =
    \argmax_{i\notin S_t}
    \frac{\E_{x\sim D}[\rho_{S_t}(x)\,u_i(x;c_i)]}
    {\E_{x\sim D}\bigl[\rho_{S_t}(x)\,R_i(x;c_i)\bigr]}, 
    \qquad
    S_t=\{\pi(1),\dots,\pi(t)\},
    \label{eq:pop_order_greedy}
\end{align}
\ie, the model with the highest expected utility per expected dollar spent on yet unsolved samples.
\paragraph{Arbitrage incentivizes models to either be cheap generalists or specialists.}
Given the above greedy deployment order, model providers can have two strategies for maximizing their revenue.
First note that for every query the probability of reaching the $i$th model in the order, $\prod_{j<i}\bigl(1-u_{\pi(j)}(x;c_{\pi(j)})\bigr)$, is the probability that models earlier in the order fail to solve the query, and it decreases monotonically for models later in the sequence. As a result, there is a clear advantage for providers to appear early in the deployment order. To do so, they need to offer a competitive utility at a low price, as in the first iteration of the greedy algorithm the model with the highest \emph{average} utility per dollar will be selected. In later iterations, the models with the highest utility per dollar on \emph{unsolved} queries get selected.
In short, models should either simply be cheap enough to appear early in the deployment order or provide complementary strength, solving queries that remain unresolved.

\section{Reducing the cost of arbitrage}\label{sec:passk}
Computing the optimal allocation requires query-level utility curves. Estimating these curves with standard unbiased estimators requires many recorded attempts per model--query pair and since each attempt is a paid model call, collecting them can be prohibitively expensive. In this section, we adapt a distributional approach to obtain query-level empirical Bayes estimates from limited observations. 

We consider a repeated sampling strategy\footnote{Note that finding the optimal arbitrage strategy remains NP-hard under repeated sampling with integer sample counts; see \Cref{thm:repeated_sampling_hardness} in \Cref{app:proofs}.} and assume that $u_i(x;c_i) := \pass{i}{x}{k}$ where $k=c_i/q_i(x)$ with $q_i(x)$ representing the average cost per attempt of model $i$ on query $x$.
Following prior distributional approaches~\citep{kazdan2025efficient,schaeffer2025powerlaws}, we model one-shot success probabilities across queries using a model-specific Beta prior. Since this prior assigns no mass to exactly zero success probability, we augment it with a point mass at zero for queries that remain unsolvable by a given model under repeated sampling, \ie, 
\[
 \pass{i}{x}{1}
 \sim \pi_i\delta_0+(1-\pi_i)\text{Beta}(a_i,b_i),
\]
where $\delta_0$ is a point mass at zero and $\pi_i$ is the prior probability that a query has zero success probability for model $i$. To stabilize estimation from few attempts per query, we maximize the log marginal likelihood with an additional penalty $\frac{5}{2}\log(a_i+b_i)$~\citep{gelman2013bayesian}, which discourages the fitted Beta distribution from concentrating around a single success probability.
We fit the parameters by maximization marginal likelihood, \ie, 
\begin{align*}
 (\hat a_i,\hat b_i,\hat\pi_i)
 =\argmax_{a,b,\pi}\;(a+b)^{-5/2}\prod_{x\in\gD}
 \left[
 \pi\1\{s_{i,x}=0\}
 +(1-\pi)\binom{n_{i,x}}{s_{i,x}}
 \frac{\text{Beta}(a+s_{i,x},\,b+n_{i,x}-s_{i,x})} 
      {\text{Beta}(a,b)}
 \right],  
\end{align*}
where $n_{i,x}$ is the total number of attempts of model $i$ for sample $x$ and $s_{i,x}$ is the number of successful such attempts.

This population-level estimate is insufficient for computational arbitrage as the arbitrageur depends on the entire matrix of model-query utility curves $u_i(x;c_i)$ to find the optimal allocation. We therefore adapt the fitted model into a query-level empirical Bayes estimator.
For every model, the fitted distribution is used as a prior for the query-specific one-shot success probability pass@$1$. Given the observed attempts for model $i$ on query $x$, the posterior is $\hat\pi_{i,x}\delta_0+(1-\hat\pi_{i,x})\text{Beta}(\hat a_i+s_{i,x},\hat b_i+n_{i,x}-s_{i,x})$, where $\hat\pi_{i,x}$ is the posterior probability that model $i$ has zero success probability on query $x$, obtained by updating the fitted prior with the observed attempts. It is zero whenever at least one success is observed.
Hence, the resulting query-level pass@$k$ estimator is
\begin{align}\label{eq:pass_k_ours}
 \hatpass{i}{x}{k}
 =(1-\hat\pi_{i,x})\,
 \E_{\pass{i}{x}{1}\sim\text{Beta}(\hat a_i+s_{i,x},\hat b_i+n_{i,x}-s_{i,x})}
 \bigl[1-(1-\pass{i}{x}{1})^k\bigr].
\end{align}
In the next section, we show that arbitrage strategies based on our estimates achieve near-oracle solve rates while requiring orders of magnitude fewer model calls for calibration.


\section{Experiments}\label{sec:experiments}
\begin{figure}[t]
    \centering
    \includegraphics[width=\textwidth]{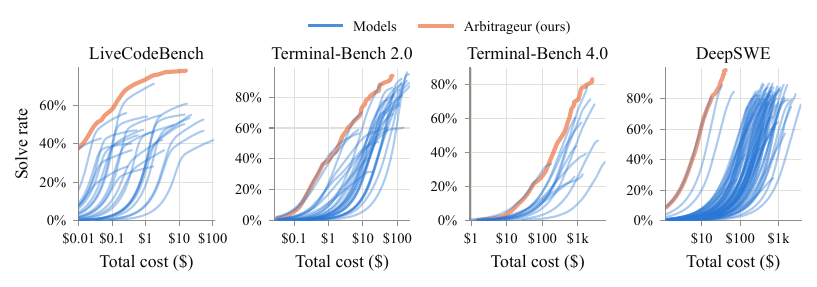}
    \caption{\textbf{Arbitrageur improves the cost--performance frontier.}
    Solve rate against total cost on four benchmarks. Each blue curve is
    one model funded on its own by repeated sampling with early stopping when a solution is found.
    The arbitrage gain is largest in the mid-budget regime, where models' strengths are most
    complementary. DeepSWE is the most concentrated of the four markets where one cheap model is the most cost-effective.
    } 
    \label{fig:frontier}  
\end{figure}

\begin{figure}[t]
    \centering
    \includegraphics[width=0.9\textwidth]{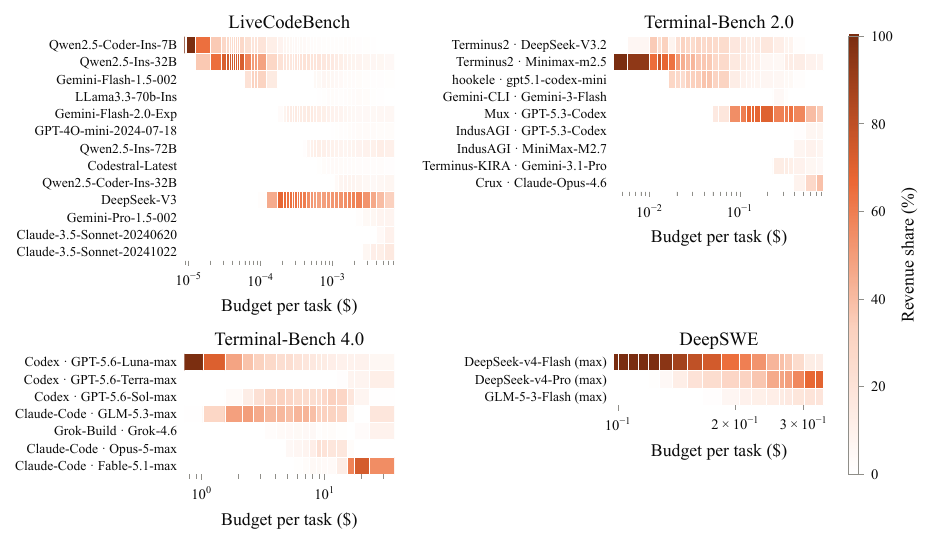}
    \caption{\textbf{Revenue spreads across providers as budgets increase.} Revenue share of each funded provider
    as a function of the budget per task under the arbitrageur's cross-validated allocation of
    \Cref{fig:frontier}, one row per system sorted by per-attempt price (cheapest on top). At lower budgets, revenue goes entirely to the
    cheaper models; as the budget grows, it spreads among more providers, and even at high
    budgets cheap models retain a significant share of the revenue. In DeepSWE, a few cheap configurations of one provider offer the highest utility per
dollar over most of the budget range.}  
    \label{fig:revshare}
\end{figure}  

\begin{figure}[t]  
    \centering
    \includegraphics[width=0.9\textwidth]{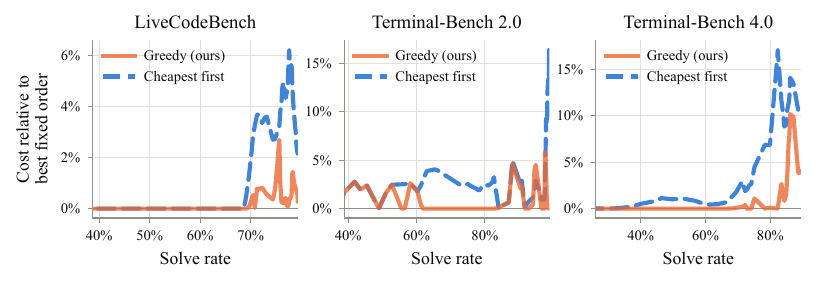}
    \caption{\textbf{The greedy deployment order is close to the best fixed order.}
    Expected spend of the arbitrageur's allocation when the funded models are queried in the greedy order of
    \Cref{eq:pop_order_greedy} (solid) or in increasing order of average cost per attempt (dashed), relative to
    the best fixed order of the funded models found by exact dynamic programming (which remains exponential in the number of funded models, the $0\%$ line), as a function of the
    solve rate. 
    The greedy order matches the best order at
    the median and stays within $10\%$ of it, whereas cheapest first spends up to $16\%$ more.}
    \label{fig:order} 
\end{figure}

\begin{figure}[t]
    \centering
    \includegraphics[width=\textwidth]{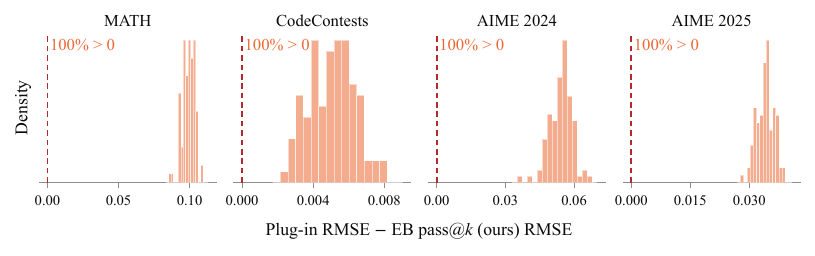}
    \caption{\textbf{Our estimator consistently outperforms the plug-in estimator.} Distribution over
    $100$ random calibration draws of the paired gap in pass@$k$ RMSE against the oracle reference,
    averaged over models and $k\leq 500$, with $n=8$ attempts per query--model pair, on MATH, CodeContests,
    AIME 2024 and AIME 2025; the two estimators are fitted on the same subsample
    of each draw. The dashed line marks no difference; mass to its right means our estimator is more accurate
    on that draw, and the annotation gives the share of such draws. The gap is positive on every draw on all benchmarks showing that our estimator consistently has lower RMSE than the plug-in estimator. Results for the MMLU benchmarks are shown in \Cref{fig:gap_app}.}
    \label{fig:gap}
\end{figure}

\begin{figure}[t]
    \centering
    \includegraphics[width=\textwidth]{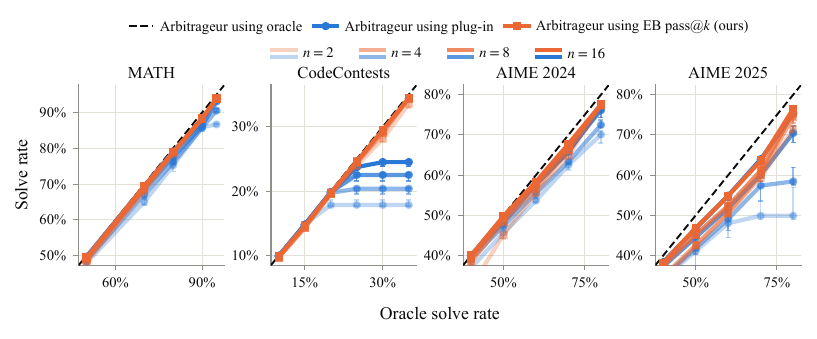}
    \caption{\textbf{An arbitrageur using our estimation method approaches the solve rate of an arbitrageur using the oracle utility curves which use several orders of magnitude more data.}
    On each benchmark of \Cref{fig:gap} we subsample $n\in\{2,4,8,16\}$ attempts per
    query--model pair, re-estimate the utility curves, run the arbitrage on the estimates
    and plot the achieved solve rate; the oracle uses all recorded attempts ($10{,}000$ per
    pair on the Monkey Business benchmarks, ${\sim}2{,}500$ on AIME) and is
    the diagonal. For each oracle solve rate we find the budget that the oracle needs to achieve that solve rate and use the same budget for other methods.} 
    \label{fig:calibration} 
\end{figure} 
In this part we evaluate our proposed arbitrage strategy, based on \Cref{thm:KKT}, together with the greedy deployment order, as in \Cref{eq:pop_order_greedy}, on four agentic and competitive programming benchmarks with recorded dollar costs for each trial. We further evaluate our estimator proposed in \Cref{sec:passk} on several benchmarks with large public records of repeated sampling.

\subsection{Arbitrage on real-world model markets}\label{subsec:exp:1}
We evaluate our arbitrage strategy on recent benchmarks spanning competitive-programming, agentic command-line and software-engineering tasks on real codebases. They evaluate many models over repeated trials, and, unlike most benchmarks, record what each trial costs, allowing us to study both cost--performance trade-offs and differences in model strengths. 

We use LiveCodeBench, Terminal-Bench~2.0, Terminal-Bench~4.0, and DeepSWE.\footnote{Refer to \Cref{app:benchmark_details} for detailed information about each benchmark.} We estimate the mean cost per attempt $q_i(x)$ for all model--query pairs using the reported cost per trial when available, and otherwise the recorded token counts scaled by public prices as of September 2026. 
More details on the experimental setup can be found in \Cref{app:experiments}.

%
For each model--query pair, we estimate the success probabilities under repeated sampling, where $c_i$ translates to $k=c_i/q_i(x)$ attempts of model $i$ on query $x$,\footnote{For non-integer $k$ the failure probability decays at a constant rate within an attempt.} using the unbiased estimator $\pass{i}{x}{k}:=1-\frac{\binom{n_{i,x}-s_{i,x}}{k}}{\binom{n_{i,x}}{k}}$~\citep{chen2021evaluating}, frozen past the recorded attempts.\footnote{With at most $5$--$10$ recorded attempts, held-out queries can only be scored within the recorded attempts. We therefore fit and evaluate the arbitrage strategy within the recorded attempts. Our pass@$k$ estimator proposed in \Cref{sec:passk}, whose value lies in predicting beyond the recorded attempts, is evaluated in \Cref{subsec:est_exp} on benchmarks with thousands of attempts per pair.}
The expected spend under repeated sampling that stops upon success is
calculated as in \Cref{eq:ind_cost}; for a portfolio, models are queried
in the greedy order of \Cref{sec:order} and the expected spend is calculated as in \Cref{eq:total_cost}. 
The arbitrageur's frontier is traced by sweeping the shadow price $\lambda$ over
a fixed grid of $86$ values; at each, the allocation and its deployment order
are cross-validated (leave-one-out on Terminal-Bench~2.0 and 4.0, $10$-fold on LiveCodeBench and DeepSWE). Total cost
is expected spend per query summed over the problems.

\Cref{fig:frontier} reports the solve rate of the arbitrageur and all the models alone, against their total spend, which shows that \textbf{the arbitrageur consistently improves the cost--performance frontier}. The largest gains occur at intermediate budgets, where combining models with complementary strengths is most valuable. It can be seen that DeepSWE is the most concentrated of the four markets: one cheap model is the most cost-effective arm over most of the budget range, so the arbitrageur tracks it and adds a more expensive model of the same provider only at the highest budgets; its gain there is accordingly the smallest. 

\Cref{fig:revshare} shows the revenue share of each model provider under the arbitrage strategy. At lower budgets, revenue goes entirely to the
cheaper models; as the budget grows, it spreads among more providers, and even at high
budgets cheap models retain a significant share of the revenue. These results suggest that \textbf{arbitrage can reduce market segmentation by directing revenue towards multiple providers, including smaller models that offer utility for a competitive price}.
\Cref{fig:order} compares the realized spend of our greedy order proposed in \Cref{sec:order} with the best fixed order of the funded models, computed by an exponential-time dynamic programming algorithm,\footnote{Note that exponential dynamic programming is possible here because only a handful of models receive non-zero allocations; in larger markets with more funded models, it can become computationally prohibitive.} and with the common heuristic of querying cheapest models first. \textbf{Our greedy order matches the best order at most budgets} and remains consistently close to it, unlike the cheapest-first order. On DeepSWE at most three models are funded and both ordering rules recover the best order.

\subsection{Arbitrage with estimated utility curves}\label{subsec:est_exp}
To evaluate the quality of our estimation method proposed in \Cref{sec:passk}, we need to determine how much of the solve rate an arbitrageur who uses our method to estimate the utility curves loses, compared to an arbitrageur who finds the true utility curves. To do so, we utilize benchmarks with many recorded attempts per model--query pair, allowing us to construct high-precision references for utility curves.
We use MATH (10 models, 128 problems) and CodeContests (5 models, 140 problems) from the Monkey Business benchmarks~\citep{brown2024largelanguagemonkeys} which provide $10,000$ recorded binary attempts per query--model pair as well as AIME 2024--2025, Global-MMLU-Lite~\citep{singh2025global}, and MMLU-Pro~\citep{wang2024mmlupro} (12 models, 30 problems per benchmark) where we use the $2,500$ recorded binary attempts per pair from~\citet{truong2026item}. None of these benchmarks record a price per attempt, so we price attempts by a compute proxy: model parameters times the model's mean generation length on the problem. Results for the MMLU benchmarks are reported in \Cref{app:utility_estimation}. 

We compare three estimation methods:
(i) the \emph{oracle} which estimates $\pass{i}{x}{k}$ based on all the available attempts using the unbiased estimator,
(ii) the \emph{plug-in} which estimates based on $n$ attempts per model-query pair, using the plug-in estimator $\hat p = s_{i,x}/n$ and extrapolates $1-(1-\hat p)^{k}$ to any $k$,
(iii) our \emph{empirical Bayes (EB)} estimator which estimates based on $n$ attempts per pair using \Cref{eq:pass_k_ours}. 

We first compare our method to the plug-in estimator in terms of the RMSE of the estimated pass@$k$ curves, using the oracle as the ground truth. \Cref{fig:gap} shows that \textbf{our estimator consistently has lower RMSE than the plug-in estimator}, suggesting that pooling information across queries (through the model-level prior) substantially improves utility estimation in the low-data regime.
We next visualize the solve rate of an arbitrageur using different estimators. \Cref{fig:calibration} summarizes the results where for each oracle solve rate, all methods spend the deployment budget the oracle requires for that rate, and we report medians over $10$ independent draws. It can be seen that \textbf{an arbitrageur using our estimation method tracks the oracle solve rate which uses up to orders of magnitude more data, and hence more cost}. On the CodeContests benchmark, the plug-in estimator often observes no successful attempts for a query and therefore assigns it \(\hat p=0\), regardless of \(k\). Its achievable solve rate is consequently limited to queries with at least one observed success. Our estimator avoids this failure mode by borrowing strength across queries through a model-specific prior.
Similar results for the MMLU benchmarks are reported in \Cref{fig:calibration_app}. 

\section{Further related work}

\textbf{Model cascading.} In model cascading, models are queried sequentially, typically in increasing order of cost, until a satisfactory solution is found~\citep{varshney2022model, wang2023tabi,chen2023frugalgpt,aggarwal2024automix,ramirez2024optimising}. Computational arbitrage addresses the problem of deciding which models should even be funded and how the inference budget should be distributed among them to maximize performance. Consequently, an optimal arbitrage strategy may exclude models that are dominated by the rest of the portfolio, and its cost-minimizing deployment order is not necessarily in increasing model cost. 

\textbf{Model routing.} Model routing assigns each query to one of the several available models, typically using predicted difficulty, quality, or cost to improve the cost--performance trade-off~\citep{shnitzer2024large,ding2024hybrid,lu2024routing,sakota2024fly}. Computational arbitrage addresses a complementary problem: rather than selecting a single model, it determines how to distribute inference spending across a portfolio of providers, potentially querying and repeatedly sampling from multiple models for the same query. Routing could nevertheless strengthen an arbitrage strategy by predicting query-level model performance. Conversely, research on model routing can adopt arbitrage strategies as benchmark for algorithm development. 


\textbf{Inference-time scaling.}
A line of work studies the scaling behavior of LLMs in various aspects including context length~\citep{xiong2024effective}, inference optimization~\citep{snell2024scalingtesttime}, and repeated sampling~\citep{brown2024largelanguagemonkeys,schaeffer2025powerlaws, kazdan2025efficient}.
%
Our work is most closely related to studies of repeated sampling as we estimate query-level scaling curves under repeated sampling to identify arbitrage strategies using limited data. Examining arbitrage strategies under other inference-time scaling methods such as chain-of-thought prompting~\citep{wei2022chain,wang2023selfconsistency}, majority voting~\citep{wan2025beacon}, and best-of-N~\citep{kalayci2025optimal,komiyama2025best} is an interesting direction for future work.

\textbf{Economics of LLMs.} 
A rapidly growing literature addresses the economic roles that LLMs can play in society~\citep{fish2024generative,boehmer2025generative,filippas2024large,raman2024steer,duetting2024mechanism}, examines cost-aware fine-tuning~\citep{laufer2024fine}, test-time training~\citep{chen2026leaderboard}, evaluation~\citep{erol2026cost,wang2024reasoning} and pricing and incentive problems in LLM markets, including opacity and potential overcharging~\citep{artola2026overcharging,artola2026testtime, sun2025coin,cai2025getting,saig2024incentivizing}. We study how computational arbitrage can itself reshape the economics of LLM markets by reallocating demand across them, affecting their revenue shares, and creating incentives for competitive pricing and complementary specialization.

\section{Conclusion}
In this work we take a step towards making computational arbitrage practical in large AI model markets. We identify a natural condition under which the optimal arbitrage strategy can be computed efficiently. At this optimal allocation, every funded model provides the same marginal utility per dollar, directing revenue toward providers that are either broadly cost-effective or specialized in queries that the rest of the portfolio is likely to fail on. The deployment order reinforces these incentives: inexpensive models with strong average performance tend to be queried early, while complementary specialists remain valuable later in the sequence. Together, these findings suggest that arbitrage can discipline pricing while rewarding specialization in AI model markets.
%


\section*{Acknowledgements}
We thank Florian Dorner, Ana-Andrea Stoica, and Yatong Chen for helpful discussions and valuable feedback on this work.

\bibliographystyle{plainnat}
\bibliography{refs}

\appendix
\crefalias{section}{appendix}
\crefalias{subsection}{subappendix}
\crefalias{subsubsection}{subsubappendix}

\section{Additional examples}\label{app:examples}

\begin{example}[Diversification despite pointwise dominance]\label{app:example_1}
Suppose two models have acceptance probabilities
\[
\Pr_{y\sim p_1(\cdot\mid x,c)}[A(x,y)=1]=1-(1+c)^{-1},
\qquad
\Pr_{y\sim p_2(\cdot\mid x,c)}[A(x,y)=1]=1-(1+c)^{-9/10}
\]
on every query $x$, and their outputs are conditionally independent given the query. Model 1 strictly dominates model 2: at every positive budget, it has a higher acceptance probability on every query.

With a total budget of $C=2$, allocating everything to the dominant model gives
\[
\Pr_{y\sim p_1(\cdot\mid x,2)}[A(x,y)=1]=\frac{2}{3}.
\]
Splitting the budget equally instead gives, with independent outputs $y_i\sim p_i(\cdot\mid x,1)$,
\[
\begin{aligned}
    \Pr[A(x,y_1)=1\text{ or }A(x,y_2)=1]
    &=1-\Pr[A(x,y_1)=0]\Pr[A(x,y_2)=0]\\
    &=1-2^{-19/10}
    \approx0.732
    >\frac{2}{3}.
\end{aligned}
\]
These acceptance probabilities are the same for every query, so the comparison holds under any query distribution $D$. Thus, even allocating the entire budget to a model that dominates every other model on every query can be suboptimal. Diminishing returns to spending on the dominant model can make diversification beneficial without complementary strengths across queries.
\end{example}

\begin{example}[Diversification through complementary strengths]\label{app:example_2}
Suppose we have two models and the query space is partitioned into two equally
likely disjoint regions
\(\gX=\gX_1\sqcup \gX_2\), with \(D(\gX_i)=1/2\). Model \(i\)
performs better on \(\gX_i\), and its utility is
\[
    \Pr[A(x,y)=1]
    =
    \begin{cases}
        1-e^{-ac}, & x\in \gX_i,\\
        1-e^{-bc}, & x\notin \gX_i,
    \end{cases}
    \qquad a>b>0.
\]
Thus, every model's performance increases with its allocated budget on
every query, but it improves faster on queries in its specialty. Assume further that the models' outputs are independent conditional on the query. Allocating the entire budget $C$ to either model gives the average acceptance probability of:
\[
    \Pr_{x\sim D, y\sim p_i(.\mid x,C)}[A(x,y)=1]
    =
    1-\frac{e^{-aC}+e^{-bC}}{2}.
\]
In contrast, if we allocate \(c_i=C/2\) to every model, for every query we have that
\[
    \Pr[A(x,y_1)=1 \text{ or } A(x,y_2)=1]
    =
    1-e^{-\frac{C}{2}(a+b)}
    >
    1-\frac{e^{-aC}+e^{-bC}}{2},
\]
where the strict inequality follows from the arithmetic--geometric mean
inequality. Hence, at the same spend \(C\) on every query,
a diversified portfolio can strictly outperform every single model.
\end{example}

\section{Proofs}\label{app:proofs}
\thmNPhard*
\begin{proof}\label{proof:nphard}
    We use a standard reduction from Maximum Coverage (MC), which is NP-hard~\citep{karp1972reducibility}. In MC, given a collection of sets $S:=(S_1,\dots,S_N)$, the objective is to select at most $B$ sets such that the number of covered elements of $\gX$ is maximized.
    
    Given an instance of MC, let $D$ be a uniform distribution over $\gX$. For each set $S_i$, we construct model $i$ such that
    \[
    \Pr_{y_i\sim p_i(\cdot\mid x,c_i)}
    \left[A(x,y_i)=1\right]
    =
   \1\{x\in S_i\}\1\{c_i\geq1\},
    \]
   and we set \(C=B\).

    Let $\vc:=(c_1, \dots, c_N)$ be an optimal solution to our problem. Let $I(\vc):=\{i:c_i\geq 1\}$. Model $i$ succeeds deterministically on $x\in S_i$ whenever a budget of at least 1 is assigned to it, and never succeeds outside $S_i$. Therefore, the portfolio succeeds on $x$ exactly when $x\in\bigcup_{i\in I(\vc)}S_i$, hence we have that
    

    \[
    U(\vc)=\frac{\lvert\bigcup_{i\in I(\vc)}S_i\rvert}{\lvert\gX\rvert}.
    \]
    %

    Now for any such optimal solution to our problem we construct a solution to MC where a set $S_i$ is selected whenever $i\in I(\vc)$. Note that since each $i\in I(\vc)$ receives at least one unit of budget, $\lvert I(\vc)\rvert\leq \sum_{i\in I(\vc)} c_i\leq \sum_{i\in [N]} c_i \leq B$.

    Conversely, given a solution to MC, we construct a solution to our problem by setting $c_i=1$ whenever $S_i$ is selected and $c_i=0$ otherwise. Our utility equals the fraction of elements covered by the selected sets, thus maximizing $U(\vc)$ is equivalent to MC up to the constant factor $\frac{1}{|\gX|}$.
    Given both directions of the proof, our problem is NP-hard.  
\end{proof}

%



\thmKKT*
\begin{proof}
We first show that under \Cref{ass:logconvex} the objective is monotone and jointly concave over the budget simplex \(\Delta:=\{\vc\in\R_{\geq 0}^N:\sum_{i\in[N]} c_i \leq C\}\). As a result, every KKT point is a global optimum.

We start by proving monotonicity and joint concavity.
Recall that $\alpha_i(x;c) :=-\log (1-u_i(x;c))$, and that we can rewrite the objective function in \Cref{eq:cost_objective} as:
\[
U(\vc) = \E_{x\sim D}[1-\exp(-\sum_{i=1}^N \alpha_i(x;c_i))],
\]
where by assumption, each $\alpha_i(x;c_i)$ is non-decreasing and concave, hence, $\sum_{i=1}^N \alpha_i(x;c_i)$ is jointly concave in $\vc$. Furthermore, $g\mapsto 1-\exp(-g)$ is strictly increasing and concave in $g$, and hence the composition $1-\exp(-\sum_{i=1}^N \alpha_i(x;c_i))$ is also jointly concave in $\vc$.
Taking expectation preserves concavity, hence, $U(\vc)$ is jointly concave in $\vc$. Monotonicity is immediate from each $\alpha_i(x;c_i)$ being non-decreasing. 
Since each \(\alpha_i(x,\cdot)\) is differentiable, the portfolio utility for every fixed \(x\), 
\[
1-\exp\left(-\sum_{i=1}^N\alpha_i(x,c_i)\right)
\] is continuous in \(\vc\). Since \(D\) has finite support, \(U(\vc)\) is a finite weighted sum of these continuous functions and is therefore continuous. Since \(\Delta\) is compact, the extreme value theorem guarantees that \(U\) attains a global maximum on \(\Delta\).

The feasible set \(\Delta\) is convex and described by affine constraints. For \(C>0\), the allocation \(c_i=C/(2N)\) is strictly feasible, hence Slater’s condition holds. Therefore, the KKT conditions are both necessary and sufficient~\citep{boyd2004convex}. As a result,
    a global maximizer is characterized by the existence of $\lambda\geq 0$ such
    that $\lambda(C-\sum_{i=1}^N c_i^*)=0$, and for all $i\in[N]$,
    \begin{align}
        &\frac{\partial U(\vc^*)}{\partial c_i}
        =\E_{x\sim D}\!\left[
    \underbrace{\frac{\partial \alpha_i(x,c_i^*)}{\partial c_i}}_{\substack{\text{Marginal log-failure reduction}\\\text{purchased per dollar for $x$}}}
    \underbrace{e^{-\sum_{\ell=1}^N\alpha_\ell(x,c_\ell^*)}}_{\substack{\text{Probability of failure of}\\\text{the model pool on $x$}}}
\right]
        \;\begin{cases} =\lambda, & c_i^*>0,\\ \leq\lambda, & c_i^*=0. \end{cases}
\end{align} 

In practice, we maximize \(U(\mathbf c)-\lambda\sum_i c_i\) over \(\mathbf c\geq0\) using coordinate ascent, sweeping the shadow price \(\lambda\) over a grid. Refer to \Cref{alg:frontier} for pseudocode of the algorithm.

\end{proof}

\propOrder*
\begin{proof}
Consider two models \(a,b\) occupying adjacent positions in \(\pi\), reached with
common probability \(\rho\) (all earlier models failed), and abbreviate
\(u_a:=u_a(x;c_a)\) and \(R_a:=R_a(x;c_a)\). Their contribution to
\(R^\pi(x;\vc)\) is \(\rho\bigl[R_a+(1-u_a)R_b\bigr]\) if \(a\)
precedes \(b\) and \(\rho\bigl[R_b+(1-u_b)R_a\bigr]\) otherwise. Every position
before the pair is unchanged, and every position after the pair sees the same
combined survival probability \((1-u_a)(1-u_b)\) regardless of the pair's internal
order, so a swap changes only this pair's contribution. Placing \(a\) first is
weakly better iff \(R_a+(1-u_a)R_b\le R_b+(1-u_b)R_a\), i.e.\ iff
\(R_a u_b\le R_b u_a\), or equivalently
    $\frac{u_a}{R_a}\ge\frac{u_b}{R_b}$.
So by an exchange argument the total cost is minimized by sorting the models in
decreasing order of their expected utility per cost on every query.
\end{proof}

\propNPHard*
\begin{proof}
We reduce from minimum-sum set cover. Let
$\mathcal{X}=\{x_1,\dots,x_m\}$ be a universe and let
$\mathcal{S}=\{S_1,\dots,S_N\}$ be a collection of sets whose union is
$\mathcal{X}$. For a permutation $\pi$ of the sets, define the cover time
of element $x$ by
\[
    T_\pi(x)
    :=
    \min\{t:x\in S_{\pi(t)}\}.
\]
The minimum-sum set-cover problem asks for a permutation minimizing
\[
    \sum_{x\in\mathcal{X}} T_\pi(x),
\]
and is NP-hard~\citep{feige2004approximating}.

Construct a model-ordering instance with one query for every
$x\in\mathcal{X}$, drawn uniformly, and one model $p_i$ for every set
$S_i$. Set $q_i(x)=c_i=1$ and let model $p_i$ succeed deterministically
exactly on the queries contained in $S_i$:
\[
    p_i(x)
    =
    \begin{cases}
        1, & x\in S_i,\\
        0, & x\notin S_i.
    \end{cases}
\]
We interpret the endpoint cases in the survival-function formulation
directly. If $x\notin S_i$, then
\[
    R_i(x;1)
    =
    \int_0^1 (1-p_i(x))^t\,\mathrm{d}t
    =
    1,
\]
whereas if $x\in S_i$, then
\[
    R_i(x;1)
    =
    \int_0^1 (1-p_i(x))^t\,\mathrm{d}t
    =
    0.
\]
For a fixed query $x$, every model preceding the first set containing
$x$ fails and contributes one unit of cost. The first model whose set
contains $x$ succeeds and terminates the cascade. Consequently,
\[
    R^\pi(x;\mathbf{c})
    =
    T_\pi(x)-1.
\]
Taking the expectation under the uniform query distribution gives
\[
    \E_{x\sim D}[R^\pi(x;\mathbf{c})]
    =
    \frac{1}{m}\sum_{x\in\mathcal{X}}T_\pi(x)-1.
\]
Therefore, an algorithm that computed the optimal model
order would also solve minimum-sum set cover. Hence, choosing the
cost-minimizing fixed model order is NP-hard.
\end{proof}

\begin{theorem}\label{thm:repeated_sampling_hardness}
Finding an optimal arbitrage policy is NP-hard, even under repeated sampling strategy.
\end{theorem}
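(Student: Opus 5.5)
Again we reduce from Maximum Coverage, as in the proof of \Cref{thm:arbitrage_nphard}. The difference now is that the utility curves must have the repeated-sampling form $u_i(x;c_i)=\pass{i}{x}{k}=1-(1-p_i(x))^{k}$. Here $k=c_i/q_i(x)$ is restricted to be a non-negative integer number of attempts, and $p_i(x)=\pass{i}{x}{1}$ is the one-shot success probability. The natural construction is the deterministic one used in the proof of the ordering hardness result. We take one query per element of $\gX$, drawn uniformly, and one model per set $S_i$. We put $q_i(x)=1$ for all $i,x$, and let $p_i(x)=\1\{x\in S_i\}$. Then for any integer $k\geq 1$ we get $u_i(x;k)=\1\{x\in S_i\}$, while $u_i(x;0)=0$. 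This is exactly the step utility $\1\{x\in S_i\}\1\{c_i\geq 1\}$ from the earlier reduction. The difference is that it now arises as a legitimate repeated-sampling curve with a constant per-attempt cost.

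We set the budget $C=B$ and require $c_i\in\mathbb{Z}_{\geq 0}$, i.e.\ $k_i=c_i$ attempts. Conditional independence of the outputs holds trivially, since every outcome is deterministic given $x$. Hence the portfolio succeeds on $x$ iff $x\in\bigcup_{i:k_i\geq 1}S_i$, and
\[
U(\vc)=\frac{\bigl\lvert\bigcup_{i:k_i\geq 1}S_i\bigr\rvert}{\lvert\gX\rvert}.
\]
The two directions then follow the earlier proof. A feasible integer allocation funds at most $B$ models, because each funded model consumes at least one unit of budget; so it induces an MC solution of equal value. Conversely, an MC solution with at most $B$ sets gives $k_i=1$ on the selected sets. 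So optimal values coincide up to the factor $1/\lvert\gX\rvert$, and the construction is polynomial.

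The one point needing care is whether degenerate success probabilities $p_i(x)\in\{0,1\}$ are admissible under the repeated-sampling model. If one insists on $p_i(x)\in(0,1)$, I would perturb the construction. Take $p_i(x)=1-\varepsilon$ for $x\in S_i$ and $p_i(x)=0$ (or a tiny $\delta$) otherwise. Then the failure probability on $x$ is $\varepsilon^{\sum_{i\ni x}k_i}$, so $U$ equals the coverage fraction minus at most $\varepsilon$ times a bounded term. Spending more than one attempt on a set only changes $U$ by $O(\varepsilon)$. Choosing $\varepsilon<1/(2\lvert\gX\rvert)$ therefore keeps coverage gaps of $1/\lvert\gX\rvert$ strictly dominant. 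An optimal allocation must then maximize coverage first, and its support (pruned to at most $B$ sets) is an optimal MC solution. I expect this robustness argument to be the only nontrivial step. The integrality of $k$ is what breaks the concavity guaranteed by \Cref{thm:KKT}, which is why hardness persists.
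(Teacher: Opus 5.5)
Your proposal is correct and is the paper's proof: the same Maximum Coverage reduction with unit per-attempt cost, $\pass{i}{x}{1}=\1\{x\in S_i\}$, integer attempts, $C=B$, and the same two directions. Your $\varepsilon$-perturbation to non-degenerate success probabilities goes beyond the paper, which simply accepts the $\{0,1\}$ probabilities, and it is sound, because coverage gaps of $1/\lvert\gX\rvert$ dominate the $O(\varepsilon)$ correction.
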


\begin{proof}
We use the same reduction from Maximum Coverage as in the general hardness proof. Given sets $S_1,\dots,S_N\subseteq\gX$ and a budget of $B$ sets, let $D$ be uniform over $\gX$, set $C=B$, and assign each model unit cost per attempt, $q_i(x)=1$. Define its one-shot success probability by
\[
 \pass{i}{x}{1}=\1\{x\in S_i\}.
\]
For $k_i\in\mathbb{Z}_{\geq0}$ independent attempts, its success probability is
\[
 \pass{i}{x}{k_i}
 =\begin{cases}
 0, & k_i=0,\\
 1-(1-\pass{i}{x}{1})^{k_i}, & k_i\geq1,
 \end{cases}
 =\1\{x\in S_i\}\1\{k_i\geq1\}.
\]
These deterministic outcomes satisfy conditional independence. Since $c_i=k_i$, a feasible allocation satisfies $\sum_i k_i\leq B$. Writing $I(\mathbf{k})=\{i:k_i\geq1\}$, its utility is
\[
 U(\mathbf{k})
 =\frac{\bigl|\bigcup_{i\in I(\mathbf{k})}S_i\bigr|}{|\gX|},
 \qquad |I(\mathbf{k})|\leq B.
\]
Thus every feasible allocation selects at most $B$ sets with exactly the same normalized coverage. Conversely, any selection of at most $B$ sets is realized by assigning one attempt to each selected model and zero to the others. Repeated attempts on a selected model yield no additional utility. The construction and both mappings are polynomial in the Maximum Coverage input size, so an optimal repeated-sampling allocation yields an optimal Maximum Coverage solution.
\end{proof}

\section{Additional details on utility estimation methods}

\textbf{Sampling strategy to estimate pass@k.}
An unbiased and stable estimator of $\pass{i}{x}{k}$ was proposed by~\citet{chen2021evaluating}:
    $\hatpass{i}{x}{k} := 1-\frac{\binom{n_{i,x}-s_{i,x}}{k}}{\binom{n_{i,x}}{k}}$,
where $n_{i,x}$ is the total number of attempts of model $i$ for query $x$ and $s_{i,x}$ is the number of successful such attempts; note that the estimator above requires $n_{i,x}\geq k$ attempts. For large values of $k$, it requires many generations per problem which is costly. Therefore, in the next part we adopt a distributional strategy~\citep{schaeffer2025powerlaws,kazdan2025efficient} to estimate $\pass{i}{x}{k}$.

\textbf{Distributional strategy to estimate pass@k.}
Since attempts are i.i.d., $\pass{i}{x}{k}$ can be written as
    $\pass{i}{x}{k} = 1-(1-\pass{i}{x}{1})^k$.
Hence, to estimate $\pass{i}{x}{k}$, it suffices to estimate $\pass{i}{x}{1}$. \citet{schaeffer2025powerlaws} propose a distributional strategy estimator where they first select a prior distribution on $\pass{i}{x}{1}$, 
they discretize this distribution according to the sampling resolution,
and fit the distribution's parameters using MLE under the discretized distribution's PMF. \citet{kazdan2025efficient} further show that this discretization introduces bias and propose a strategy which does not require discretization. 

\section{Additional experimental details}\label{app:experiments}
\subsection{Details of the performance-target experiment}\label{app:fig_1}

We estimate the success probabilities $u_i(x;c_i)$ under repeated sampling using the unbiased estimator of \citet{chen2021evaluating} within the recorded attempts and from the empirical Bayes estimator of \Cref{sec:passk} beyond that. 
For every model--query pair, we find the expected spend under repeated sampling that stops upon success using the survival function $\int_0^{c_i}(1-u_i(x;t))\,\mathrm{d}t$. We use the same budget for each query, find the probability that the query is solved using that budget, find the average solve rate over the dataset, and increase the budget until the solve rate reaches the target. Budgets are swept over $600$ log-spaced points from $\$10^{-4}$ to $\$10$ per task.

\subsection{Benchmark details}\label{app:benchmark_details}

\emph{Terminal-Bench~2.0}\footnote{\href{https://hub.harborframework.com/datasets/terminal-bench/terminal-bench-2/latest}{Terminal-Bench~2.0}} is a benchmark of $88$ shared agentic software engineering tasks. Of its submissions, $75$ agent--model systems publish per-trial
artifacts.\footnote{\href{https://huggingface.co/datasets/harborframework/terminal-bench-2-leaderboard}{Terminal-Bench~2.0 leaderboard submission}} We keep the $28$ that record either a per-trial dollar cost or usable token counts.
Each model--query pair has at most $5$ recorded attempts with binary outcomes. We estimate the mean cost per attempt for all model--query pairs using the reported cost per trial when available, and otherwise the recorded input, output, and cached token counts scaled by the LiteLLM price database as of September 2026.\footnote{\href{https://github.com/BerriAI/litellm/blob/3d0fd127d5a151f7f094462b16ac9c2a01a047b6/model_prices_and_context_window.json}{LiteLLM price database snapshot, September 17, 2026}}

\emph{LiveCodeBench}\footnote{\href{https://github.com/LiveCodeBench/submissions}{LiveCodeBench leaderboard submissions}}~\citep{jain2024livecodebench}
consists of $73$ models on $713$ shared problems, however, only $36$ models record more than one attempt out of which $10$ models are  reasoning models which do not record reasoning traces so their token cost is not recoverable. We are hence left with $24$ models.
Each attempt is treated as a separate request and its input
is billed without caching.
We use the
model-specific tokenizer when publicly available and \texttt{cl100k\_base}\footnote{\href{https://tiktokenizer.vercel.app/?model=cl100k_base}{\texttt{cl100k\_base} BPE tokenizer}} as
a proxy otherwise. In this regard note that Terminal-Bench trials are multi-turn agent runs whose billed context is not recoverable from the stored logs, so systems that report neither cost nor token counts are excluded rather than priced based on proxy tokenizers. Prices are taken from the same dated LiteLLM and
OpenRouter snapshots as Terminal-Bench; historical list prices are used for models absent from
both snapshots.


\emph{Terminal-Bench~4.0}\footnote{\href{https://github.com/harbor-framework/terminal-bench/tree/main/leaderboard}
{Terminal-Bench~4.0 leaderboard}} provides $12$ agent--model systems on $66$
shared tasks with $5$ recorded attempts per (system, task). Here the leaderboard records the dollar cost of every
attempt directly, so no token pricing is needed; one submission whose attempts all report a cost of zero is excluded, as a
free arm would be infinitely cheap to the arbitrageur. 

\emph{DeepSWE}\footnote{\href{https://deepswe.datacurve.ai}
{DeepSWE v1.1 public trials}} provides $70$ (model, reasoning-effort) configurations of $28$ base models run under a
common agent harness on $111$ shared software-engineering tasks, with $4$ attempts per pair (a handful of pairs have
fewer) and a recorded dollar cost per attempt. As in the main text, every configuration is a distinct arm.

For all benchmarks, we treat each distinct agent--model pair or model--reasoning-effort configuration as a separate model.

\subsection{Experimental details of utility curve estimation}\label{app:utility_estimation}
For the Monkey Business benchmarks~\citep{brown2024largelanguagemonkeys}, the dollar cost of an attempt is not available so we use the number of FLOPs of one generation as the compute proxy which is calculated as the model's parameter count times the mean number of tokens it generates for that question~\citep{kaplan2020scaling}. The AIME 2024--2025, Global-MMLU-Lite and MMLU-Pro data are the test-time scaling experiments of \citet{truong2026item}: we take the outcomes from their response tensor\footnote{\url{https://huggingface.co/datasets/stair-lab/irsl_testtime_resmat2}} and the length of every response from the accompanying per-response release,\footnote{\url{https://huggingface.co/datasets/stair-lab/denoise_eval_query}} which records the number of generated tokens. We price an attempt on a question at the model's active parameter count times the mean number of tokens it generates on that question.

We report the estimation error of our method on the MMLU benchmarks in \Cref{fig:gap_app}, complementing the four benchmarks in \Cref{fig:gap}, and repeat the evaluation of \Cref{fig:calibration} on Global-MMLU-Lite~\citep{singh2025global} and MMLU-Pro~\citep{wang2024mmlupro}. Results are in \Cref{fig:calibration_app}. For both, we use the $30$ questions per benchmark and the $\sim2{,}500$ recorded binary attempts per model--question pair released by \citet{truong2026item} for $12$ open-weight reasoning models. As these models are self-hosted, the per-attempt cost is the compute proxy above. Note that only the cost ratios matter to the arbitrageur since scaling every cost by the same constant rescales the budget without changing the allocation.


\begin{figure}[t]
    \centering
    \includegraphics[width=0.8\textwidth]{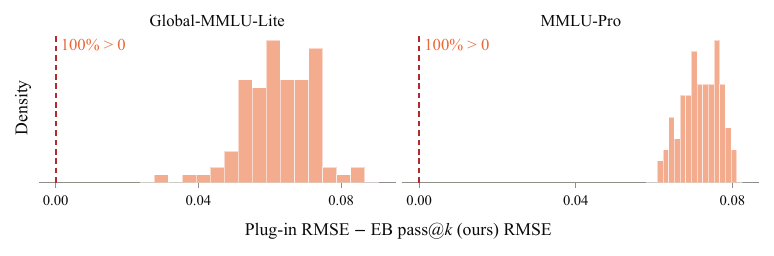}
    \caption{\textbf{Our estimator consistently outperforms the plug-in estimator.} Distribution over
    $100$ random calibration draws of the paired gap in pass@$k$ RMSE against the oracle reference,
    averaged over models and $k\leq 500$, with $n=8$ attempts per query--model pair, on Global-MMLU-Lite and MMLU-Pro; the two estimators are fitted on the same subsample
    of each draw. The dashed line marks no difference; mass to its right means our estimator is more accurate
    on that draw, and the annotation gives the share of such draws. The gap is positive on every draw on all benchmarks showing that our estimator consistently has lower RMSE than the plug-in estimator.}
    \label{fig:gap_app}
\end{figure}

\begin{figure}[t]
    \centering
    \includegraphics[width=0.8\textwidth]{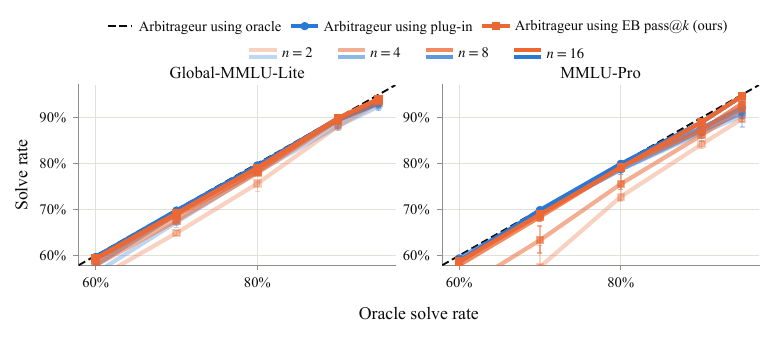}
    \caption{\textbf{An arbitrageur using our estimation method tracks the solve rate of an arbitrageur using the oracle utility curves which are calculated using several orders of magnitude more data.}
    On each of the MMLU benchmarks, we subsample $n\in \{2,4,8,16\}$ attempts per
    query--model pair, re-estimate the utility curves, run the arbitrage on the estimates
    and plot the achieved solve rate; the oracle uses all the $2{,}500$ recorded attempts and is
    the diagonal. For each oracle solve rate we find the budget that the oracle needs to achieve that solve rate and use the same budget for the other methods.}   
    \label{fig:calibration_app}
\end{figure}


\end{document}